\theoremstyle{plain}
\newtheorem{theorem}{Theorem}[section]
\theoremstyle{definition}
\newtheorem{definition}[theorem]{Definition}
\theoremstyle{remark}
\newcommand\normx[1]{\Vert#1\Vert}
\newcommand\myeq{\stackrel{\mathclap{\normalfont\mbox{def}}}{=}}
\newcommand{\eg}{\textit{e}.\textit{g}. }
\icmltitlerunning{Gradient Surgery for One-shot Unlearning on Generative Model}
\begin{document}

\twocolumn[
\icmltitle{Gradient Surgery for One-shot Unlearning on Generative Model}



\icmlsetsymbol{equal}{*}

\begin{icmlauthorlist}
\icmlauthor{Seohui Bae}{comp}
\icmlauthor{Seoyoon Kim}{comp}
\icmlauthor{Hyemin Jung}{comp}
\icmlauthor{Woohyung Lim}{comp}
\end{icmlauthorlist}

\icmlaffiliation{comp}{LG AI Research, Seoul, South Korea}

\icmlcorrespondingauthor{Seohui Bae}{seohui.bae@lgresearch.ai}
\icmlcorrespondingauthor{Woohyung Lim}{w.lim@lgresearch.ai}

\icmlkeywords{deep unlearning, generative model, privacy}

\vskip 0.3in
]



\printAffiliationsAndNotice{} 

\begin{abstract}
Recent regulation on right-to-be-forgotten emerges tons of interest in unlearning pre-trained machine learning models. While approximating a straightforward yet expensive approach of retrain-from-scratch, recent machine unlearning methods unlearn a sample by updating weights to remove its influence on the weight parameters. In this paper, we introduce a simple yet effective approach to remove a data influence on the deep generative model. Inspired by works in multi-task learning, we propose to manipulate gradients to regularize the interplay of influence among samples by projecting gradients onto the normal plane of the gradients to be retained. Our work is agnostic to statistics of the removal samples, outperforming existing baselines while providing theoretical analysis for the first time in unlearning a generative model. 
\end{abstract}

\vspace{-0.3in}
\section{Introduction}
Suppose a user wants to get rid of his/her face image anywhere in your facial image generation application - including the database and the generative model on which it is trained. Is the expensive retrain-from-scratch the only solution for this kind of request? As the use of personal data has been increased in training the machine learning models for online service, meeting individual demand for privacy or the rapid change in the legislation of General Data Protection Registration (GDPR) is inevitable to ML service providers nowadays. This request on `Right-To-Be-Forgotten (RTBF)' might be a one-time or in-series, scaling from a feature to a number of tasks, querying single instance to multiples. 
A straightforward solution for unlearning a single data might be to retrain a generative model from scratch without data of interest. This approach, however, is intractable in practice considering the grand size and complexity of the latest generative models ~\citep{rombach2022high, child2020very} and the continual request for removal. 

Unlearning, thereafter, aims to approximate this straightforward-yet-expensive solution of retrain-from-scratch time and computation efficiently. First-order data-influence-based approximate unlearning is currently considered the state-of-the-art approach to unlearning machine learning models in general. Grounded by the notion of data influence~\citep{koh2017understanding}, a simple one-step Newton's update certifies sufficiently small bound between retrain-from-scratch~\citep{guo2020certified}. Nonetheless, those relaxations are infeasible to the non-convex deep neural networks (\eg generative model) where the gap is not certifiably bounded and the process of computing the inverse of hessian is intractable. Several recent works also have affirmed that these relaxed alternatives perform poorly on deep neural networks~\citep{golatkar2021mixed,liu2022continual} and even that on generative models have not been explored yet.

\paragraph{Contribution} In this work, we propose a novel one-shot unlearning method for unlearning samples from pre-trained deep generative model. Relaxing the definition of influence function on parameters in machine unlearning ~\citep{koh2017understanding,basu2020second}, we focus on the influence of a single data on the \textit{test loss} of the others and propose a simple and cost-effective method to minimize this inter-dependent influence to approximate retrain-from-scratch. We summarize our contributions as follows:
\vspace{-0.1in}
\begin{itemize} 
    \item We propose to annul the influence of samples on generations with simple gradient manipulation. 
    \vspace{-0.05 in}
    \item Agnostic to removal statistics and thus applied to any removals such as a single data, a class, some data feature, etc.  
    \vspace{-0.05 in}
    \item Grounded by a theoretical analysis bridging standard machine unlearning to generative model. 
\end{itemize}

\section{Gradient Surgery for One-shot Data Removals on Generative Model} 
\label{sec:method}

    \paragraph{Notations} {Let $D=\{\textbf{x}_i\}_{i=1}^{N} \subseteq \mathcal{X}$ be the training data where $\textbf{x}_i \in \mathcal{X}$ is input. Let $D_f \subseteq D$ be a subset of training data that is to be forgotten (\textit{i.e.} forget set) and $D_r = D \setminus D_f$ be remaining training data of which information we want to retain. Recall that the goal of unlearning is to approximate the deep generative model retrained from scratch with only $D_r$, which we denote as $f_{\theta^*}$ parameterized by $\theta^*$. Then, our goal is to unlearn $D_f \subseteq D$ from a converged pre-trained generator $f_{\hat{\theta}}$ by updating the parameter $\hat{\theta} \rightarrow{\theta^{-}}$, where $\theta^{-}$ represents the updated parameters obtained after unlearning. 
    }
\vspace{-0.2in}
\paragraph{Proposed method}{ 
Given a generative model that models the distribution of training data $p(D)$, a successful unlearned model that unlearns $D_f$ would be what approximates $p(D_r)$, the distribution of $D_r$, as if it had never seen $D_f$. The only case where the unlearned model generates samples similar to $x\in D_f$ is when $p(D_f)$ and $p(D_r)$ happen to be very close from the beginning. Under this goal, a straight-forward objective given the pre-trained model approximating $p(D)$ is to make the output of generation to \textit{deviate from} $p(D_f)$, which could be simply formulated as the following:  
\begin{equation} \label{eq:eq1}
\begin{aligned}
\max_{\theta} \mathbb{E}_{(x,y)\sim D_f}\mathcal{L}(\theta, x, y)
\end{aligned}  
\end{equation}
where $\mathcal{L}$ denotes training loss (\textit{e.g.} reconstruction loss).
Meanwhile, assume we could \textit{define} the influence of a single data on the weight parameter and generation result. Then, unlearning this data would be by simply updating the weight parameter in a direction of removing the data influence. Toward this, we start with defining the data influence on weight parameters and approximates to feasible form as introduced in \citet{koh2017understanding}:
\begin{definition} 
Given upweighting $z$ by some small $\epsilon$ and the new parameters $\hat{\theta}_{\epsilon,z} \myeq \operatorname*{argmin}_{\theta \in \Theta}  \frac{1}{n} \sum_{i=1}^{n} \mathcal{L}(z_i, \theta) + \epsilon \mathcal{L}(z,\theta)$, the influence of upweighting $z$ on the parameter $\hat{\theta}$ is given by 
\begin{align}
I_{up,param}(z) \myeq \frac{d\hat{\theta}_{\epsilon,z}}{d\epsilon}\Bigr|_{\epsilon=0} \myeq -H_{\hat{\theta}}^{-1} \nabla_{\theta} L(z,\hat{\theta}) 
\end{align}
where $H_{\hat{\theta}} = \frac{1}{n}\sum_{i=1}^{n} \nabla_{\theta}^2 L(z_i, \hat{\theta})$ is the Hessian and is positive definite (PD) by assumption. 
\end{definition}
By forming a quadratic approximation to the empirical risk around $\hat{\theta}$, a data influence on the weight parameter is formulated as a single Newtons step (See details in Appendix of \citep{koh2017understanding}), which is consistent with the objective we have mentioned in Equation~\ref{eq:eq1}. Although numerous works have verified that this data influence-based approach works well in shallow, discriminative models~\citep{guo2020certified,golatkar2020eternal,golatkar2020forgetting}, we cannot apply this directly to our generative model due to intractable computation and lack of guarantees on bounds. 
To address this problem, we re-purpose our objective to minimize the \textbf{data influence on generation}. Grounded by recent works~\citep{basu2020second, sun2023regularizing}, we find that we could enjoy this on generative model simply by diminishing the gradient conflict as follows: 
\begin{theorem} \label{thm:gs}
Reducing the influence of samples $z\in D_f$ in training data with regard to test loss is formulated as:
\begin{equation}
\begin{aligned}
\vspace{-0.1in}
I^{'}_{up,loss}(D_f,z') \rightarrow 0, \quad 
\end{aligned}
\end{equation}
which is equivalent to 
\begin{equation}
\begin{aligned}
\quad \nabla_\theta \mathcal{L}(z',\hat{\theta})^T \sum_{z \in D_f} \nabla_\theta \mathcal{L}(z,\hat{\theta}) \rightarrow 0 
\end{aligned}
\end{equation}
where $z'\in D_r$ in our scenario. 
\vspace{-0.1in}
\end{theorem}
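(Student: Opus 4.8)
The plan is to extend the influence-function machinery of the preceding definition one step further --- from the influence on parameters to the influence on a held-out test loss --- and then to strip away the Hessian so that the remaining object is a plain gradient inner product. First I would recall that $I_{up,param}(z) = -H_{\hat{\theta}}^{-1}\nabla_{\theta}\mathcal{L}(z,\hat{\theta})$ and apply the chain rule to the loss evaluated at a retained point $z'\in D_r$. Since the perturbed minimizer $\hat{\theta}_{\epsilon,z}$ enters the test loss only through the parameters, differentiating at $\epsilon=0$ gives
\[
\begin{aligned}
I_{up,loss}(z,z') &= \frac{d\mathcal{L}(z',\hat{\theta}_{\epsilon,z})}{d\epsilon}\Big|_{\epsilon=0} = \nabla_{\theta}\mathcal{L}(z',\hat{\theta})^{T}\, I_{up,param}(z) \\
&= -\,\nabla_{\theta}\mathcal{L}(z',\hat{\theta})^{T} H_{\hat{\theta}}^{-1}\nabla_{\theta}\mathcal{L}(z,\hat{\theta}).
\end{aligned}
\]

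Next I would aggregate over the entire forget set. Upweighting all of $D_f$ by a common $\epsilon$ perturbs the empirical risk by $\epsilon\sum_{z\in D_f}\mathcal{L}(z,\theta)$, so to first order the individual influences add, and the relaxed quantity in the statement becomes
\[
I'_{up,loss}(D_f,z') = -\,\nabla_{\theta}\mathcal{L}(z',\hat{\theta})^{T} H_{\hat{\theta}}^{-1}\sum_{z\in D_f}\nabla_{\theta}\mathcal{L}(z,\hat{\theta}).
\]
At this point the claim is almost visible: the right-hand side is a bilinear form in the retained gradient and the summed forget gradient, mediated by the positive-definite matrix $H_{\hat{\theta}}^{-1}$.

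The decisive step is to remove $H_{\hat{\theta}}^{-1}$. Here I would invoke that $H_{\hat{\theta}}$ is PD by the standing assumption and, near a converged minimizer, approximate its curvature as locally isotropic, $H_{\hat{\theta}}\approx \lambda\,\mathbf{I}$ with $\lambda>0$ (equivalently, adopt the identity-Hessian simplification standard in influence-based unlearning). Under this approximation the bilinear form collapses to $-\tfrac{1}{\lambda}\,\nabla_{\theta}\mathcal{L}(z',\hat{\theta})^{T}\sum_{z\in D_f}\nabla_{\theta}\mathcal{L}(z,\hat{\theta})$, and since $\lambda>0$ the positive scalar factor does not affect the limit. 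Hence $I'_{up,loss}(D_f,z')\to 0$ if and only if $\nabla_{\theta}\mathcal{L}(z',\hat{\theta})^{T}\sum_{z\in D_f}\nabla_{\theta}\mathcal{L}(z,\hat{\theta})\to 0$, which is exactly the gradient-conflict term the surgery is designed to annul by projection onto the normal plane.

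I expect the Hessian-removal step to be the main obstacle and the one most in need of care. For a general PD $H_{\hat{\theta}}$ the two bilinear forms $g_1^{T}H^{-1}g_2$ and $g_1^{T}g_2$ are \emph{not} equivalent: a bounded-condition-number argument yields only two-sided bounds with constants, not a genuine ``iff,'' because the $H^{-1}$-metric and the Euclidean metric order pairs of vectors differently. Consequently I would state the equivalence as holding under the isotropic (scaled-identity) approximation of the Hessian --- or, more weakly, up to condition-number-dependent constants --- and make that assumption explicit rather than assert an unconditional identity.
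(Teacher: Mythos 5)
Your derivation is correct in structure but follows a genuinely different route from the paper's, and the difference is worth spelling out. The paper's $I'_{up,loss}(D_f,z')$ is not the aggregated first-order influence you compute: it is the cross-term of the \emph{second-order group influence} from \citet{basu2020second}, i.e.\ the $\mathcal{O}(\epsilon^2)$ correction $\mathcal{I}'_{up,param}(D_f) = \mathcal{A}\,H_{\hat{\theta}}^{-1}\sum_{z\in D_f}\nabla_\theta\mathcal{L}(z,\hat{\theta})$ with an additional matrix $\mathcal{A}$ depending on the forget fraction $p$ and averaged Hessians. The paper then contracts this with $\nabla_\theta\mathcal{L}(z',\hat{\theta})$ exactly as you do via the chain rule, arriving at the bilinear form $\nabla_\theta\mathcal{L}(z',\hat{\theta})^{T}\mathcal{A}H_{\hat{\theta}}^{-1}\sum_{z\in D_f}\nabla_\theta\mathcal{L}(z,\hat{\theta})$, and concludes by declaring $\mathcal{A}$ and $H_{\hat{\theta}}^{-1}$ ``negligible.'' So both arguments funnel into the same final move --- discard a mediating matrix to turn a preconditioned inner product into a Euclidean one --- but you reach it from the first-order expansion while the paper reaches it from the second-order one. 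What your version buys is a cleaner and more defensible final step: with only the positive-definite $H_{\hat{\theta}}^{-1}$ to remove, your explicit isotropic-Hessian assumption (or the weaker condition-number bound) is a legitimate, clearly flagged approximation, and your observation that $g_1^{T}H^{-1}g_2$ and $g_1^{T}g_2$ are not equivalent for general PD $H$ is exactly the gap the paper glosses over --- indeed the paper's situation is worse, since $\mathcal{A}$ is not even guaranteed positive definite, so the sign of the bilinear form is not controlled there. What the paper's version buys is fidelity to the group-unlearning setting: the second-order cross term is the quantity that actually captures the interaction between the forget set and the retained point when a non-infinitesimal fraction of the data is removed, which is the regime the experiments operate in. If you want your proof to match the stated theorem, you should either redefine $I'$ as the first-order group influence (and say so), or redo the contraction starting from the Basu et al.\ cross term and extend your isotropy assumption to cover $\mathcal{A}$ as well.
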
 
Informally, we could achieve this by alleviating the conflict between two gradients $\nabla_\theta \mathcal{L}(z',\hat{\theta})$ and $\nabla_\theta \mathcal{L}(z,\hat{\theta})$, resulting in diminishing the inner product of two gradients. This reminds us of a classic approach of gradient manipulation techniques for conflicting gradients in multi-task learning scenario ~\citep{yu2020gradient,liu2021conflict,guangyuanrecon}. Specifically, we project a gradient of forget sample $x_f \in D_f$ onto normal plane of a set of retain samples $x_r \in D_r$ to meet $\mathcal{I}_{up,loss}(x_f, x_r)=0$. This orthogonal projection manipulates the original gradient of forget sample $\mathbf{g}_f=\nabla\mathcal{L}_f$ to the weight parameter to which sufficiently unlearns a sample $x_f \in D_f$: $\textbf{g}_f = \textbf{g}_f - \frac{\textbf{g}_f \cdot \textbf{g}_r}{\normx{\textbf{g}_r^2}}\textbf{g}_r$. Then, the unlearned model $\theta^{-}$ is obtained after the following gradient update: $\theta^{-} = \hat{\theta} - \eta\textbf{g}_f$. 
\section{Experiments}
We verify our idea under numerous data removal requests. Note that measuring and evaluating a generative model to unlearn \textit{a single data} is non-trivial. Even comparing pre-trained generative models trained \textit{with} a particular data over \textit{without} simply by looking at the output of training (\textit{e.g.} generated image, weight) is intractable in case of a deep generative model to the best of our knowledge~\citep{van2021memorization}. To make the problem verifiable, in this work, we experiment to unlearn a group of samples sharing similar statistics in the training data - either belonging to a particular class or that has a distinctive semantic feature. In this case, one can evaluate the output of the generation by measuring the number of samples including that class or a semantic feature; a successfully unlearned model would generate nearly zero number of samples having these features. Although we are not able to cover unlearning a single data in this work, note that in essence, our method could successfully approximate the generative model trained without a single data seamlessly, and we look forward to exploring and adjusting a feasible evaluation on this scenario in the near future. 
\begin{table*}[t] \label{tb:qual_table}
  \caption{\small Performance of Class/Feature Unlearning VAE on MNIST138 (\textit{left columns}) and CelebA (\textit{right column}) Each experiments are three times repeated. (*) indicates erroneous evaluation by a pre-trained feature classifier. \textbf{Bold} indicates the best score. }  
  \centering 
  \begin{center}
\begin{small}
\begin{sc}
\resizebox{1.0\linewidth}{!}{
  \begin{tabular}{lcccccccc}
    \toprule
     & \multicolumn{4}{c}{MNIST138(Class: 1)} & \multicolumn{4}{c}{CelebA(Feature: Male)} \\
    \cmidrule(r){2-9} 
    Metric & Privacy & \multicolumn{2}{c}{Utility} & Cost & Privacy & \multicolumn{2}{c}{Utility} & Cost \\
    \cmidrule(r){2-9}
    \text{} & \textit{fratio}($\downarrow$) & \textit{IS}($\uparrow$) & \textit{FID}($\downarrow$) & \textit{Time}(s)($\downarrow$) & \textit{fratio}($\downarrow$) & \textit{IS}($\uparrow$) & \textit{FID}($\downarrow$) & \textit{Time}(s)($\downarrow$) \\
    \midrule
    Before & 0.343(0.027) & 2.053(0.029) & 0.030(0.003) & 218.6 & 0.394(0.119) & 1.812(0.044)  & 29.81(0.341) & $3\times10^4$\\ 
    \midrule
    Grad.Ascnt. & 0.264(0.141) & 2.029(0.018) & 0.127(0.059) & \textbf{1.010} & - (*) & \textbf{1.311}(0.076) & 30.93(1.215) & \textbf{97.31} \\ 
    \citet{moon2023feature}  & 0.344(0.019) & 2.048(0.021) & \textbf{0.031}(0.002) & 166.2 & 1.000(0.000) & 1.000(0.000) & \textbf{15.81}(9.831) & $8\times10^4$ \\ 
    \midrule
    Ours  & \textbf{0.153}(0.057) & \textbf{2.192}(0.076) & 0.092(0.030) & 13.12 & \textbf{0.150}(0.098) & 1.254(0.013) & 34.24(0.698) & 613.2\\ 
    \bottomrule 
    \end{tabular}
}
\end{sc}
\end{small}
\end{center}
\vskip -0.1in
\end{table*}
\subsection{Experimental Setup}
\begin{description}[leftmargin=0pt]
    \item[Scenarios]{
     We unlearn either a whole class or some notable feature from a group of samples. In the experiment, we use a subset of MNIST~\citep{alsaafin2017minimal} with samples of classes 1,3,8 
     and 64x64 CelebA~\citep{liu2015deep} to train and unlearn vanilla VAE~\citep{kingma2013auto}.
    }
    \vspace{-0.1in}
    \item[Evaluation]{ 
    We evaluate our method under the following three criteria: a privacy guarantee, utility guarantee, and cost. Privacy guarantee includes feature ratio (\textit{ fratio}), a ratio of images including the target feature (See details in Appendix~\ref{appdx:experiment}). Utility guarantee includes Frechet Inception Distance (\textit{FID}), a widely used measure for generation quality. Cost includes a total execution time (\textit{Time}) which should be shorter than retrain-from-scratch. A successfully unlearned model would show near-zero on feature ratio, the same IS, FID score as the initial pre-trained model (BEFORE), and the lowest possible execution time. Given the legal impact and the goal of unlearning, note that guaranteeing privacy is prioritized the highest. 
    } 
\end{description}

\begin{figure} [h]\label{fig:gs_mnist_final}
\centerline{\includegraphics[width=0.99\columnwidth]{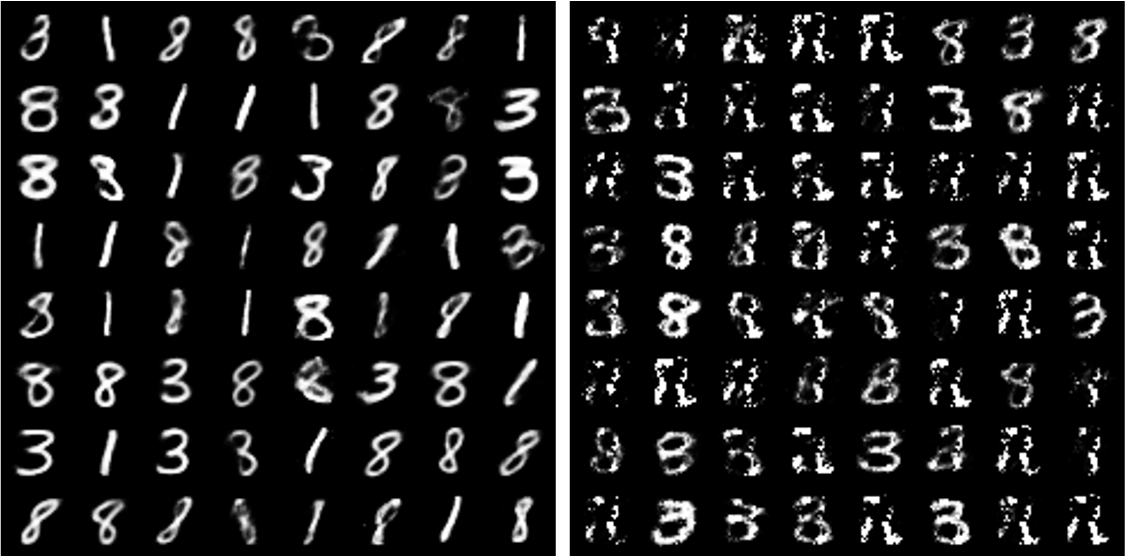}}
\caption{\small Unlearning groups of class 1 samples from VAE pre-trained on MNIST138 (\textit{left}: original, \textit{right}: unlearned) Note that images of class 1 do not appear in generation result.}
\end{figure}


\subsection{Result on Pre-trained Generative Model}
\paragraph{Quantitative Result}{ We run the proposed method on pre-trained VAE to remove unlearning group $D_f$ (\textit{e.g.} class 1 or male, respectively) and evaluate them as follows (Table~\ref{tb:qual_table}) Starting from the pre-trained model (BEFORE) our method unlearns the target $D_f$ with a large decrease on $\textit{fratio}$ by 65\% to 70\% while keeping the time cost of unlearning $\leq$ 5\% of retrain-from-scratch. 
All the while, our method still keeps a decent utility performance. Comparing the baselines, our method shows the best in privacy - the prioritized metric - through all experiments. Note that the feature ratio of gradient ascent in the CelebA experiment (feature ratio-CelebA-Grad.Ascnt) was omitted because the generated samples are turned out to be noisy images and thus the evaluation result of pre-trained classifier cannot be accepted. Also, note that although baselines show better performance in terms of utility and cost, they don't show near-best score on privacy guarantee. 

\paragraph{Qualitative Result}{
We further validate our method by comparing the generated images before and after the proposed unlearning algorithm. As in Figure~\ref{fig:gs_mnist_final}, no class 1 samples are observed after unlearning class 1, meaning that our method successfully meets the request of unlearning class 1, which aligns with the quantitative result where the ratio of samples with class 1 is reduced from 34.3\% to $\leq$ 15\% as in Table~\ref{tb:qual_table}. The output of image generation is fair where 3 and 8 are decently distinguishable through one's eyes, although it is certain that some examples show some minor damaged features, which are in the same line as a decrease in IS and an increase in FID score. Note that the ultimate goal of unlearning is to meet the privacy guarantee while preserving the utility of pre-training, which are remained as our next future work.
}

\section{Conclusion}
In this work, we introduce a novel theoretically sounded unlearning method for the generative method. Inspired by the influence of the sample on the others, we suggest a simple and effective gradient surgery to unlearn a given set of samples on a pre-trained generative model and outperform the existing baselines. Although we don't experiment to unlearn single data due to a lack of ground evaluation on the uniqueness of the particular data, we leave it as future work emphasizing that our method could also be applied to this scenario. Furthermore, it would be interesting to verify our ideas on various privacy-sensitive datasets. Nonetheless, our work implies the possibility of unlearning a pre-trained generative model, laying the groundwork for privacy handling in generative AI. 


\nocite{bishop1992exact}
\nocite{goodfellow2013multi}
\nocite{fu2022knowledge}
\nocite{liu2021federaser}
\nocite{gupta2021adaptive}
\nocite{bourtoule2021machine}
\nocite{zhang2022prompt}

\bibliography{references}

\begin{thebibliography}{29}
\providecommand{\natexlab}[1]{#1}
\providecommand{\url}[1]{\texttt{#1}}
\expandafter\ifx\csname urlstyle\endcsname\relax
  \providecommand{\doi}[1]{doi: #1}\else
  \providecommand{\doi}{doi: \begingroup \urlstyle{rm}\Url}\fi

\bibitem[Alsaafin \& Elnagar(2017)Alsaafin and Elnagar]{alsaafin2017minimal}
Alsaafin, A. and Elnagar, A.
\newblock A minimal subset of features using feature selection for handwritten
  digit recognition.
\newblock \emph{Journal of Intelligent Learning Systems and Applications},
  9\penalty0 (4):\penalty0 55--68, 2017.

\bibitem[Basu et~al.(2020)Basu, You, and Feizi]{basu2020second}
Basu, S., You, X., and Feizi, S.
\newblock On second-order group influence functions for black-box predictions.
\newblock In \emph{International Conference on Machine Learning}, pp.\
  715--724. PMLR, 2020.

\bibitem[Bishop(1992)]{bishop1992exact}
Bishop, C.
\newblock Exact calculation of the hessian matrix for the multilayer
  perceptron, 1992.

\bibitem[Bourtoule et~al.(2021)Bourtoule, Chandrasekaran, Choquette-Choo, Jia,
  Travers, Zhang, Lie, and Papernot]{bourtoule2021machine}
Bourtoule, L., Chandrasekaran, V., Choquette-Choo, C.~A., Jia, H., Travers, A.,
  Zhang, B., Lie, D., and Papernot, N.
\newblock Machine unlearning.
\newblock In \emph{2021 IEEE Symposium on Security and Privacy (SP)}, pp.\
  141--159. IEEE, 2021.

\bibitem[Child(2020)]{child2020very}
Child, R.
\newblock Very deep vaes generalize autoregressive models and can outperform
  them on images.
\newblock \emph{arXiv preprint arXiv:2011.10650}, 2020.

\bibitem[Fu et~al.(2022)Fu, He, and Tao]{fu2022knowledge}
Fu, S., He, F., and Tao, D.
\newblock Knowledge removal in sampling-based bayesian inference.
\newblock \emph{arXiv preprint arXiv:2203.12964}, 2022.

\bibitem[Golatkar et~al.(2020{\natexlab{a}})Golatkar, Achille, and
  Soatto]{golatkar2020eternal}
Golatkar, A., Achille, A., and Soatto, S.
\newblock Eternal sunshine of the spotless net: Selective forgetting in deep
  networks.
\newblock In \emph{Proceedings of the IEEE/CVF Conference on Computer Vision
  and Pattern Recognition}, pp.\  9304--9312, 2020{\natexlab{a}}.

\bibitem[Golatkar et~al.(2020{\natexlab{b}})Golatkar, Achille, and
  Soatto]{golatkar2020forgetting}
Golatkar, A., Achille, A., and Soatto, S.
\newblock Forgetting outside the box: Scrubbing deep networks of information
  accessible from input-output observations.
\newblock In \emph{Computer Vision--ECCV 2020: 16th European Conference,
  Glasgow, UK, August 23--28, 2020, Proceedings, Part XXIX 16}, pp.\  383--398.
  Springer, 2020{\natexlab{b}}.

\bibitem[Golatkar et~al.(2021)Golatkar, Achille, Ravichandran, Polito, and
  Soatto]{golatkar2021mixed}
Golatkar, A., Achille, A., Ravichandran, A., Polito, M., and Soatto, S.
\newblock Mixed-privacy forgetting in deep networks.
\newblock In \emph{Proceedings of the IEEE/CVF Conference on Computer Vision
  and Pattern Recognition}, pp.\  792--801, 2021.

\bibitem[Goodfellow et~al.(2013)Goodfellow, Bulatov, Ibarz, Arnoud, and
  Shet]{goodfellow2013multi}
Goodfellow, I.~J., Bulatov, Y., Ibarz, J., Arnoud, S., and Shet, V.
\newblock Multi-digit number recognition from street view imagery using deep
  convolutional neural networks.
\newblock \emph{arXiv preprint arXiv:1312.6082}, 2013.

\bibitem[Guangyuan et~al.()Guangyuan, Li, Zhang, Chen, and Wu]{guangyuanrecon}
Guangyuan, S., Li, Q., Zhang, W., Chen, J., and Wu, X.-M.
\newblock Recon: Reducing conflicting gradients from the root for multi-task
  learning.
\newblock In \emph{The Eleventh International Conference on Learning
  Representations}.

\bibitem[Guo et~al.(2020)Guo, Goldstein, Hannun, and Van
  Der~Maaten]{guo2020certified}
Guo, C., Goldstein, T., Hannun, A., and Van Der~Maaten, L.
\newblock Certified data removal from machine learning models.
\newblock In \emph{International Conference on Machine Learning}, pp.\
  3832--3842. PMLR, 2020.

\bibitem[Gupta et~al.(2021)Gupta, Jung, Neel, Roth, Sharifi-Malvajerdi, and
  Waites]{gupta2021adaptive}
Gupta, V., Jung, C., Neel, S., Roth, A., Sharifi-Malvajerdi, S., and Waites, C.
\newblock Adaptive machine unlearning.
\newblock \emph{Advances in Neural Information Processing Systems},
  34:\penalty0 16319--16330, 2021.

\bibitem[He et~al.(2016)He, Zhang, Ren, and Sun]{he2016deep}
He, K., Zhang, X., Ren, S., and Sun, J.
\newblock Deep residual learning for image recognition.
\newblock In \emph{Proceedings of the IEEE conference on computer vision and
  pattern recognition}, pp.\  770--778, 2016.

\bibitem[Higgins et~al.(2017)Higgins, Matthey, Pal, Burgess, Glorot, Botvinick,
  Mohamed, and Lerchner]{higgins2017beta}
Higgins, I., Matthey, L., Pal, A., Burgess, C., Glorot, X., Botvinick, M.,
  Mohamed, S., and Lerchner, A.
\newblock beta-vae: Learning basic visual concepts with a constrained
  variational framework.
\newblock In \emph{International conference on learning representations}, 2017.

\bibitem[Kingma et~al.(2021)Kingma, Salimans, Poole, and
  Ho]{kingma2021variational}
Kingma, D., Salimans, T., Poole, B., and Ho, J.
\newblock Variational diffusion models.
\newblock \emph{Advances in neural information processing systems},
  34:\penalty0 21696--21707, 2021.

\bibitem[Kingma \& Welling(2013)Kingma and Welling]{kingma2013auto}
Kingma, D.~P. and Welling, M.
\newblock Auto-encoding variational bayes.
\newblock \emph{arXiv preprint arXiv:1312.6114}, 2013.

\bibitem[Koh \& Liang(2017)Koh and Liang]{koh2017understanding}
Koh, P.~W. and Liang, P.
\newblock Understanding black-box predictions via influence functions.
\newblock In \emph{International conference on machine learning}, pp.\
  1885--1894. PMLR, 2017.

\bibitem[Liu et~al.(2021{\natexlab{a}})Liu, Liu, Jin, Stone, and
  Liu]{liu2021conflict}
Liu, B., Liu, X., Jin, X., Stone, P., and Liu, Q.
\newblock Conflict-averse gradient descent for multi-task learning.
\newblock \emph{Advances in Neural Information Processing Systems},
  34:\penalty0 18878--18890, 2021{\natexlab{a}}.

\bibitem[Liu et~al.(2022)Liu, Liu, and Stone]{liu2022continual}
Liu, B., Liu, Q., and Stone, P.
\newblock Continual learning and private unlearning.
\newblock In \emph{Conference on Lifelong Learning Agents}, pp.\  243--254.
  PMLR, 2022.

\bibitem[Liu et~al.(2021{\natexlab{b}})Liu, Ma, Yang, Wang, and
  Liu]{liu2021federaser}
Liu, G., Ma, X., Yang, Y., Wang, C., and Liu, J.
\newblock Federaser: Enabling efficient client-level data removal from
  federated learning models.
\newblock In \emph{2021 IEEE/ACM 29th International Symposium on Quality of
  Service (IWQOS)}, pp.\  1--10. IEEE, 2021{\natexlab{b}}.

\bibitem[Liu et~al.(2015)Liu, Luo, Wang, and Tang]{liu2015deep}
Liu, Z., Luo, P., Wang, X., and Tang, X.
\newblock Deep learning face attributes in the wild.
\newblock In \emph{Proceedings of the IEEE international conference on computer
  vision}, pp.\  3730--3738, 2015.

\bibitem[Moon et~al.(2023)Moon, Cho, and Kim]{moon2023feature}
Moon, S., Cho, S., and Kim, D.
\newblock Feature unlearning for generative models via implicit feedback.
\newblock \emph{arXiv preprint arXiv:2303.05699}, 2023.

\bibitem[Rombach et~al.(2022)Rombach, Blattmann, Lorenz, Esser, and
  Ommer]{rombach2022high}
Rombach, R., Blattmann, A., Lorenz, D., Esser, P., and Ommer, B.
\newblock High-resolution image synthesis with latent diffusion models.
\newblock In \emph{Proceedings of the IEEE/CVF Conference on Computer Vision
  and Pattern Recognition}, pp.\  10684--10695, 2022.

\bibitem[Springenberg et~al.(2014)Springenberg, Dosovitskiy, Brox, and
  Riedmiller]{springenberg2014striving}
Springenberg, J.~T., Dosovitskiy, A., Brox, T., and Riedmiller, M.
\newblock Striving for simplicity: The all convolutional net.
\newblock \emph{arXiv preprint arXiv:1412.6806}, 2014.

\bibitem[Sun et~al.(2023)Sun, Mu, and Hua]{sun2023regularizing}
Sun, Z., Mu, Y., and Hua, G.
\newblock Regularizing second-order influences for continual learning.
\newblock In \emph{Proceedings of the IEEE/CVF Conference on Computer Vision
  and Pattern Recognition}, pp.\  20166--20175, 2023.

\bibitem[van~den Burg \& Williams(2021)van~den Burg and
  Williams]{van2021memorization}
van~den Burg, G. and Williams, C.
\newblock On memorization in probabilistic deep generative models.
\newblock \emph{Advances in Neural Information Processing Systems},
  34:\penalty0 27916--27928, 2021.

\bibitem[Yu et~al.(2020)Yu, Kumar, Gupta, Levine, Hausman, and
  Finn]{yu2020gradient}
Yu, T., Kumar, S., Gupta, A., Levine, S., Hausman, K., and Finn, C.
\newblock Gradient surgery for multi-task learning.
\newblock \emph{Advances in Neural Information Processing Systems},
  33:\penalty0 5824--5836, 2020.

\bibitem[Zhang et~al.(2022)Zhang, Zhou, Zhao, Che, and Lyu]{zhang2022prompt}
Zhang, Z., Zhou, Y., Zhao, X., Che, T., and Lyu, L.
\newblock Prompt certified machine unlearning with randomized gradient
  smoothing and quantization.
\newblock \emph{Advances in Neural Information Processing Systems},
  35:\penalty0 13433--13455, 2022.

\end{thebibliography}
\bibliographystyle{icml2023}

\newpage
\appendix
\onecolumn

\section{Experimental Details} \label{appdx:experiment} 
\subsection{Setup}
\paragraph{Architecture}
 In this experiment, we use vanilla VAE~\citep{kingma2013auto} with encoders of either stack of linear(for MNIST experiment) or convolutional(for CelebA experiment) layers. Although we verify our result on VAE, note that our method can be applied to any variational inference based generative model such as~\citep{kingma2021variational,higgins2017beta}. 

\paragraph{Baseline}
We compare our experimental results with the following two baselines. One is a recently published, first and the only unlearning work on generative model~\citep{moon2023feature} (\textit{FU}) to unlearn by feeding a surrogate model with projected latent vectors. We reproduce FU and follow the hyperparameter details (\textit{e.g.} unlearning epochs 200 for MNIST) as in the original paper. The other is a straight-forward baseline (\textit{Grad.Ascnt.}) which updates the gradient in a direction of maximizing the reconstruction loss on forget, which is equivalent to meeting $\textit{e.g.}$ Objective ~\ref{eq:eq1} without gradient surgery. Note that we keep the same step size when unlearning with these three different methods (including ours) for fair comparison.  

\paragraph{Training details}{
We use Adam optimizer with learning rate 5e-04 for MNIST experiment and 1e-05 for CelebA experiment. We update the parameter only once (1 epoch) for removals, thus named our title 'one-shot unlearning'. All experiments are three times repeated.
}

\subsection{How to Evaluate Feature Ratio}
We first prepare a classification model that classifies the image having a target feature from the remains. In order to obtain a highly accurate classifier, we search for the best classifier which shows over 95\% accuracy. In the experiment, we use AllCNN~\citep{springenberg2014striving} to classify class 1 over the other in MNIST with 1,3,8 (MNIST381), and ResNet18~\citep{he2016deep} to classify male over female on CelebA. After unlearning, we generate 10000 samples from the generator and feed the sample to the pre-trained classifier. Assuming that the classifier classifies the image well, the prediction result would the probability that the generated output contains the features to be unlearned. 

\section{Definitions and Proof for Theoretical Analysis} \label{appdx:gs}

In ~\citet{koh2017understanding} and ~\citet{basu2020second}, an influence of sample  $z$ on weight parameter is defined as the product of its gradient and inverse of hessian. Moreover, an influence of sample $z$ to \textit{test loss} of sample $z'$ defined in as following: 
\begin{definition} (Equation 2 from \citet{koh2017understanding})
Suppose up-weighting a converged parameter $\hat{\theta}$ by small $\epsilon$, which gives us new parameters $\hat{\theta}_{\epsilon,z} \myeq \operatorname*{argmin}_{\theta \in \Theta}  \frac{1}{n} \sum_{i=1}^{n} \mathcal{L}(z_i, \theta) + \epsilon \mathcal{L}(z,\theta)$. The influence of up-weighting $z$ on the loss at an arbitrary point $z'$ against has a closed-form expression: 
\begin{equation}
\begin{aligned}
\mathcal{I}_{up,loss}(z, z') \quad \myeq \quad \frac{d\mathcal{L}(z',\hat{\theta}_{\epsilon,z})}{d\epsilon}\Bigr|_{\epsilon=0} 
\quad \\
=\quad \nabla_\theta \mathcal{L}(z',\hat{\theta})^\top H_{\hat{\theta}}^{-1} \nabla_\theta \mathcal{L}(z,\hat{\theta})
\end{aligned}
\end{equation}
\end{definition}
where $H_{\hat{\theta}} \myeq \frac{1}{n}\sum_{i=1}^{n}\nabla_{\theta}^2\mathcal{L}(z_i, \hat{\theta})$ is the Hessian and is positive definite (PD) by assumption on convex and Lipschitz continuity of loss $\mathcal{L}$. 

\begin{theorem} (Theorem~\ref{thm:gs} from Section~\ref{sec:method})
Reducing the influence of samples $z\in D_f$ in training data with regard to test loss is formulated as:
\begin{equation}
\begin{aligned}
\vspace{-0.1in}
I^{'}_{up,loss}(D_f,z') \rightarrow 0, \quad 
\end{aligned}
\end{equation}
which is equivalent to 
\begin{equation}
\begin{aligned}
\quad \nabla_\theta \mathcal{L}(z',\hat{\theta})^T \sum_{z \in D_f} \nabla_\theta \mathcal{L}(z,\hat{\theta}) \rightarrow 0 
\end{aligned}
\end{equation}
where $z'\in D_r$ in our scenario. 
\end{theorem}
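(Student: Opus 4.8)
The plan is to start from the closed-form influence $\mathcal{I}_{up,loss}(z,z') = \nabla_\theta \mathcal{L}(z',\hat{\theta})^\top H_{\hat{\theta}}^{-1} \nabla_\theta \mathcal{L}(z,\hat{\theta})$ supplied above, and to read the aggregate quantity $I'_{up,loss}(D_f, z')$ as the total influence of the whole forget set on the test point $z'$, i.e. $I'_{up,loss}(D_f, z') := \sum_{z \in D_f} \mathcal{I}_{up,loss}(z,z')$. The first step is purely algebraic: because the factor $\nabla_\theta \mathcal{L}(z',\hat{\theta})^\top H_{\hat{\theta}}^{-1}$ does not depend on the summation index $z$, linearity of the sum lets me pull it out, giving
\begin{equation}
I'_{up,loss}(D_f, z') = \nabla_\theta \mathcal{L}(z',\hat{\theta})^\top H_{\hat{\theta}}^{-1} \sum_{z \in D_f} \nabla_\theta \mathcal{L}(z,\hat{\theta}).
\end{equation}
This isolates the forget-set gradient $\mathbf{g}_f := \sum_{z \in D_f} \nabla_\theta \mathcal{L}(z,\hat{\theta})$ against the retain gradient $\nabla_\theta \mathcal{L}(z',\hat{\theta})$, so the target reduces to controlling the single bilinear form $\nabla_\theta \mathcal{L}(z',\hat{\theta})^\top H_{\hat{\theta}}^{-1} \mathbf{g}_f$.

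Second, I would address the Hessian factor, which is the only object standing between the left-hand expression and the claimed inner product. By the standing assumption, $H_{\hat{\theta}}$ is positive definite at the converged $\hat{\theta}$, so $H_{\hat{\theta}}^{-1}$ is a fixed, invertible, positive-definite operator. I would then give the equivalence in the regime the paper actually operates in: since the exact Hessian inverse is intractable for a deep generative model, one treats $H_{\hat{\theta}}^{-1}$ as a positive scaling, formally $H_{\hat{\theta}}^{-1} \approx c\,I$ with $c>0$, under which $\nabla_\theta \mathcal{L}(z',\hat{\theta})^\top H_{\hat{\theta}}^{-1} \mathbf{g}_f = c\, \nabla_\theta \mathcal{L}(z',\hat{\theta})^\top \mathbf{g}_f$ and the two quantities vanish simultaneously. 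A cleaner, assumption-light alternative I would also record is to require the influence to vanish across a spanning set of retain points $z'$: because $H_{\hat{\theta}}^{-1}$ is invertible, $\nabla_\theta \mathcal{L}(z',\hat{\theta})^\top H_{\hat{\theta}}^{-1} \mathbf{g}_f = 0$ for all such $z'$ forces $\mathbf{g}_f = 0$, which in turn makes the plain inner product vanish for every $z'$, and the converse is immediate, so both characterisations collapse to $\mathbf{g}_f = 0$ and are therefore equivalent.

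Finally, I would combine the two steps to conclude $I'_{up,loss}(D_f, z') \to 0 \iff \nabla_\theta \mathcal{L}(z',\hat{\theta})^\top \mathbf{g}_f \to 0$, and connect this back to the method by noting that the orthogonal projection $\mathbf{g}_f \leftarrow \mathbf{g}_f - \frac{\mathbf{g}_f \cdot \mathbf{g}_r}{\normx{\mathbf{g}_r}^2}\,\mathbf{g}_r$ is exactly the operation that annihilates the inner product against the retain gradient. The step I expect to be the genuine obstacle is the treatment of $H_{\hat{\theta}}^{-1}$: the equivalence is not literally true for an arbitrary positive-definite $H_{\hat{\theta}}$ and a single fixed $z'$, since a form weighted by $H_{\hat{\theta}}^{-1}$ has a different null direction than the Euclidean inner product. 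The honest move is therefore to state explicitly which justification is being invoked, namely the $H_{\hat{\theta}}^{-1}\approx c\,I$ approximation (matching the intractability argument already made in the text) or the spanning-set argument, rather than to assert an unconditional identity.
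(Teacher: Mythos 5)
Your argument is essentially sound and lands on the same bilinear form as the paper, but it takes a genuinely different route in one respect: you read $I^{'}_{up,loss}(D_f,z')$ as the aggregate of first-order influences $\sum_{z\in D_f}\mathcal{I}_{up,loss}(z,z')$, whereas the paper's prime notation refers to something else --- the cross-dependency term of the \emph{second-order group influence} from \citet{basu2020second}, i.e.\ the $\mathcal{O}(\epsilon^2)$ correction $\mathcal{I}^{'}_{up,param}(D_f) = \mathcal{A}\,H_{\hat{\theta}}^{-1}\sum_{z\in D_f}\nabla_\theta\mathcal{L}(z,\hat{\theta})$, which carries an additional prefactor matrix $\mathcal{A}$ encoding the interaction among the forgotten samples. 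After contracting with $\nabla_\theta\mathcal{L}(z',\hat{\theta})^\top$, the paper therefore has to discard \emph{two} matrices, $\mathcal{A}$ and $H_{\hat{\theta}}^{-1}$, and does so by flatly declaring them ``negligible''; you only have $H_{\hat{\theta}}^{-1}$ in the way, and you handle it more carefully. Indeed, the step you flag as the genuine obstacle is exactly the weak point of the paper's own proof: a positive-definite weighting changes the null directions of the bilinear form, so vanishing of $\nabla_\theta\mathcal{L}(z',\hat{\theta})^\top H_{\hat{\theta}}^{-1}\mathbf{g}_f$ for a single $z'$ is not literally equivalent to vanishing of the Euclidean inner product. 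Your two explicit repairs --- the isotropic approximation $H_{\hat{\theta}}^{-1}\approx c\,I$, or requiring the influence to vanish over a spanning set of retain points so that invertibility forces $\mathbf{g}_f=0$ --- are both more honest than the paper's one-line dismissal, and either would also dispose of $\mathcal{A}$ if you adopted the paper's second-order definition of $I^{'}$. In short: your decomposition is first-order where the paper's is second-order, the two differ only by the prefactor $\mathcal{A}$ that the paper discards anyway, and your treatment of the Hessian is the more defensible of the two.
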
 


\begin{proof}
The second-order influence of $D_f$, $\mathcal{I}^{(2)}_{up, param}$, is formulated as sum of first-order influence $\mathcal{I}^{(1)}_{up, param}$ and $\mathcal{I}^{'
}_{up, param}$, which captures the dependency of the terms in $\mathcal{O}(\epsilon^2)$ on the group influence is defined as following: 
\begin{equation}
\mathcal{I}^{'}_{up, param}(D_f,z') = \mathcal{A} H_{\hat{\theta}}^{-1} \sum_{z \in D_f} \nabla_\theta \mathcal{L}(z,\hat{\theta}) 
\end{equation}
where $\mathcal{A} = \frac{p}{1-p}(I-(\nabla^2 L(\theta^*))^{-1}\frac{1}{\mathcal{|U|}}\sum_{z\in \mathcal{U}}\nabla^2 \textit{l}(h_{\theta^*}(z)))$ (from \citet{basu2020second}).

The influence of samples in $D_f$ on the test loss of $z'$ can be formulated as:
\begin{equation}
\mathcal{I}_{up, loss}(D_f,z') = \nabla_\theta \mathcal{L}(z,\hat{\theta})^T \mathcal{I}_{up, param}(D_f) 
\end{equation}
which can be equivalently applied to all orders of $\mathcal{I}$ including $\mathcal{I}^{(1)}, \mathcal{I}^{(2)}, \mathcal{I}^{'}$. 

Then, $\mathcal{I}^{'}_{up, loss}(D_f,z') = 0$ is now reduced to 
\begin{equation}
\nabla_\theta \mathcal{L}(z,\hat{\theta})^T \mathcal{A} H_{\hat{\theta}}^{-1} \sum_{z \in D_f}\nabla_\theta \mathcal{L}(z,\hat{\theta}) = 0
\end{equation}

which satisfies the right-hand side of Theorem~\ref{thm:gs} where $\mathcal{A}$ and $H_{\hat{\theta}}^{-1}$ are negligible.
\end{proof}




\end{document}